\documentclass{article}

 \usepackage[preprint, nonatbib]{neurips_2026}
 \usepackage[round]{natbib}

\usepackage[utf8]{inputenc} % allow utf-8 input
\usepackage[T1]{fontenc}    % use 8-bit T1 fonts
\usepackage{hyperref}       % hyperlinks
\usepackage{url}            % simple URL typesetting
\usepackage{booktabs}       % professional-quality tables
\usepackage{amsmath,amsfonts, amsthm}       % blackboard math symbols
\usepackage{nicefrac}       % compact symbols for 1/2, etc.
\usepackage{microtype}      % microtypography
\usepackage{xcolor}         % colors
\usepackage{multirow}
\usepackage{multicol}
\usepackage{bm}
\usepackage{graphicx}
\usepackage{xspace}
\usepackage{titletoc}

\title{BaLoRA: Bayesian Low-Rank Adaptation of \\ Large Scale Models}

\author{%
  Dario Coscia\thanks{Work done in collaboration with the University of Amsterdam. Correspondence to \texttt{dario.coscia@sissa.it}.} \\
  mathLab, SISSA\\
  University of Amsterdam\\
  \And
  Sindy Löwe \\
  CuspAI\\
  \And
  Max Welling \\
  CuspAI\\
  University of Amsterdam
}

\newcommand{\eq}[1]{Eq.~\eqref{#1}}

\newcommand{\y}{\bm{y}}
\newcommand{\x}{\bm{x}}
\newcommand{\z}{\bm{z}}

\newcommand{\w}[1]{\bm{\omega}_{#1}}
\newcommand{\bW}[1]{\boldsymbol{\theta}_{#1}}
\newcommand{\W}[1]{\theta_{#1}}

\newtheorem{proposition}{Proposition}

\definecolor{blue}{rgb}{0,0.2,0.5}
\definecolor{green}{rgb}{0.1,0.35,0.0}
\definecolor{red}{rgb}{0.5,0.0,0.0}
\definecolor{purple}{rgb}{0.4,0,0.6}
\definecolor{cyan}{rgb}{0.0,0.4,0.3}
\definecolor{orange}{rgb}{0.6,0.4,0.0}
\definecolor{gray}{rgb}{0.3,0.3,0.3}

\hypersetup{%
  colorlinks,
  linkcolor=cyan,
  citecolor=blue,
  urlcolor=blue
}

\begin{document}

\maketitle

\begin{abstract}
Low-Rank Adaptation (LoRA) has become the standard for fine-tuning large pre-trained models at reduced computational cost. However, its low-rank point-estimate updates limit expressiveness, leave a persistent gap relative to full fine-tuning accuracy, and provide no built-in uncertainty quantification, limiting its applicability in settings where reliability matters as much as accuracy. We introduce BaLoRA, a Bayesian extension of LoRA with a novel input-adaptive Bayesian parameterization of LoRA matrices that adds minimal parameters and compute. Surprisingly, not only does the Bayesian extension yield well-calibrated uncertainty estimates, but the adaptive noise injection underlying our approach also significantly improves prediction accuracy, narrowing the gap with full fine-tuning across both natural language reasoning and vision tasks. When applied to band gap prediction in metal-organic frameworks, BaLoRA produces zero-shot test-time uncertainty estimates that correlate more strongly with model error than a trained ensemble of LoRA models, and improve monotonically with compute without sacrificing accuracy.
\end{abstract}

\section{Introduction}
Large pre-trained models have shown remarkable generalization across a wide range of tasks, from natural language processing~\citep{qin2023chatgpt, touvron2023llama} and multi-modal applications~\citep{liu2023visual} to atomistic simulations~\citep{woodfamily,shoghimolecules,kang2023multi}. Adapting these models to downstream tasks typically relies on full fine-tuning (FT), which updates all parameters using task-specific data. However, as model sizes grow, FT becomes impractical due to high computational and memory costs. Even when feasible, deploying multiple fine-tuned instances is challenging, as each retains the full model footprint.
\begin{figure}[t]
    \centering
    \includegraphics[width=\linewidth]{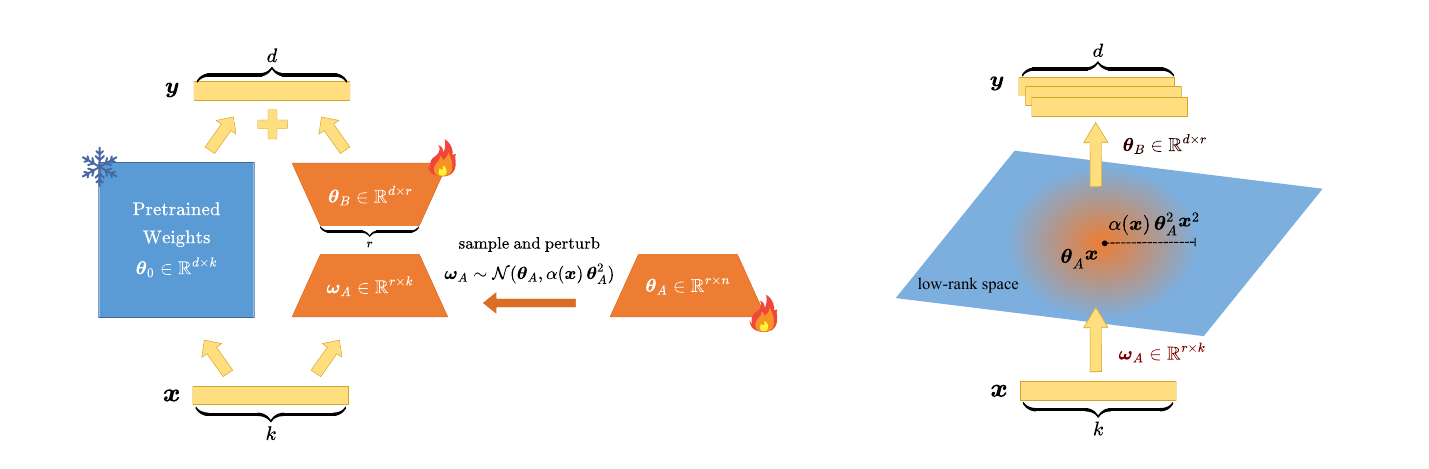}
    \caption{\textbf{Overview of BaLoRA.} \textit{(Left)} BaLoRA extends LoRA by treating the reduction matrix $\bW{A}$ as a random variable with input-adaptive Gaussian noise, $\w{A} \sim \mathcal{N}(\bW{A},\, \alpha(\x)\bW{A}^2)$, while $\bW{0}$ stays frozen. At inference, BaLoRA runs in \textit{deterministic} mode (zero latency, merged adapter) or \textit{stochastic} mode (calibrated uncertainty via sampling). \textit{(Right)} BaLoRA places an input-adaptive uncertainty ellipsoid around the LoRA point estimate, confined to the low-rank space. Uncertainty scales with input activations, concentrating the gradient signal where adaptation is active and acting as an implicit regulariser on the pretrained representations.}
    \label{fig:figure1}
\end{figure}
Parameter-efficient fine-tuning (PEFT)~\citep{houlsby2019parameter} addresses this by updating only a small subset of parameters while freezing the rest, improving efficiency without sacrificing much performance. Low-Rank Adaptation (LoRA)~\citep{hu2022lora} is a widely used PEFT method that constrains updates to a low-rank structure, capturing task-specific changes with far fewer parameters. However, LoRA and related approaches are limited by their low-rank, point-estimate updates, which reduce expressiveness and leave a gap compared to FT~\citep{liu2024dora}. Additionally, they lack uncertainty quantification, which is crucial in safety-critical and data-scarce settings. This is especially important in scientific domains such as materials science and weather forecasting, where reliable uncertainty estimates are essential~\citep{woodfamily, bodnar2025foundation, coscia2025blips,cosciabarnn}.

Drawing on the limitations of existing PEFT methods and the growing need for reliable uncertainty estimates in scientific applications, we present BaLoRA, a Bayesian extension of LoRA that addresses both challenges simultaneously. BaLoRA introduces a novel input-adaptive Bayesian reparametrization of the LoRA reduction matrix, treating its entries as random variables rather than fixed point estimates. This allows uncertainty to be naturally encoded during training via adaptive noise injection, which encourages the reduction matrix to focus on inputs associated with high uncertainty, leading to improved task performance. At inference, BaLoRA can be deployed in deterministic mode by merging adapter weights as in standard LoRA, incurring no additional latency, or in Bayesian mode when uncertainty estimates are required. Our main contributions are as follows:
\begin{itemize}
\item We propose a stochastic PEFT method with an input-adaptive Bayesian reparametrization of the LoRA reduction matrix, where input-dependent noise injection acts as an implicit regulariser that eases training, resulting in performance improvements, while enabling principled uncertainty quantification.
\item We derive a low-rank local reparametrization trick that exploits the LoRA factorization to sample the posterior predictive exactly in the low-dimensional latent space, avoiding materialisation of the full output covariance and matching the computational scaling of standard LoRA.
\item We achieve state-of-the-art commonsense reasoning on 6/8 benchmarks (Llama-3-8B) and image classification (ViT-L/16), and demonstrate the practical value of BaLoRA's uncertainty estimates in a real-world scientific setting (MOF property prediction), where BaLoRA outperforms a trained LoRA ensemble with only a single fine-tuning run.
\end{itemize}

\section{Background and Related Works}
In this section, we present the relevant background on Parameter-Efficient Fine-Tuning, Low-Rank Adapters, Bayesian modelling, and Uncertainty Quantification, which will later support the formulation of our proposed method.

\subsection{Parameter-Efficient Fine-Tuning (PEFT)}
Parameter-efficient fine-tuning (PEFT)~\citep{houlsby2019parameter} adapts large pre-trained models to downstream tasks at low computational cost by updating only a small fraction of parameters. Two main categories can be distinguished. \emph{Prompt-based} methods~\citep{lester2021power, razdaibiedina2023residual, liu2024gpt} prepend learnable soft tokens to the model input, optimizing only these additional vectors while keeping all weights frozen. \emph{Adapter-based} methods~\citep{houlsby2019parameter, mahabadi2021parameter} instead inject small trainable modules into the frozen model, offering strong performance and training stability. However, most adapter-based approaches incur inference overhead, limiting their practical applicability.

Low-Rank Adapters such as LoRA~\citep{hu2022lora} introduce no additional inference latency, built on the observation that fine-tuning updates exhibit low intrinsic rank~\citep{aghajanyan2021intrinsic}. Since LoRA can underperform full fine-tuning across diverse tasks, several extensions have been proposed. DoRA~\citep{liu2024dora} separates weight updates into magnitude and directional components, applying a LoRA-style update only to the directional part. MoRA~\citep{jiang2024mora} projects inputs into a compressed space, applies transformations with a higher-rank matrix, and reconstructs them back to the original space. HiRA~\citep{huang2025hira} leverages a Hadamard product to preserve high-rank update information, enhancing the expressive capacity of the model.
\subsection{Bayesian Models and Uncertainty Quantification}
Bayesian methods account for uncertainty in deep learning by framing learning as probabilistic inference~\citep{hinton1993keeping, welling2011bayesian, graves2011practical, gal2016dropout}, treating weights as random variables updated from prior to posterior via Bayes' theorem~\citep{bayes1763lii}. Since exact posterior computation is intractable at scale, practical approaches rely on approximations. Deep Ensembles combine independently trained models for implicit posterior estimation~\citep{lakshminarayanan2017simple}, extending naturally to parameter-efficient settings such as ensemble LoRA~\citep{wang2023lora}. Alternatively, variational methods cast posterior estimation as an optimization over tractable distributions~\citep{blundell2015weight, kingma2015variational}; notably, Variational Adaptive Dropout~\citep{coscia2025blips} achieves ensemble-comparable performance and uncertainty at the cost of a single network.

\section{Methods}
In this section, we introduce Bayesian Low-Rank Adaptation (BaLoRA), a novel parameter-efficient fine-tuning strategy that builds upon Variational Adaptive Dropout~\citep{coscia2025blips} and extends the LoRA method by naturally encoding uncertainty during both training and inference. The key innovation of BaLoRA lies in the Bayesian reparametrization of the learning process, in which an adaptive variance is incorporated into the LoRA reduction matrix (see Figure~\ref{fig:figure1}). During training, the input-adaptive noise acts as a stochastic regulariser where larger activations in $\w{A}$ induce larger perturbations, leading the model to learn a more robust low-rank representation. Critically the adaptivity ensures perturbations are concentrated where activations are largest, preventing overfitting precisely in the most active directions while leaving stable ones intact. At test time, BaLoRA can be deployed in deterministic inference mode with no additional latency over LoRA, or in Bayesian mode when uncertainty quantification is required.

\subsection{Low Rank Adaptation}
LoRA~\citep{hu2022lora} is a fine-tuning strategy that freezes the pre-trained model weights and injects trainable rank decomposition matrices into each linear layer of a neural network. For a pre-trained weight matrix $\bW{0} \in \mathbb{R}^{k \times d}$, the update is constrained to a low-rank factorization $\bW{B}\bW{A}$, where $\bW{A} \in \mathbb{R}^{r \times d}$ is the \emph{reduction} matrix, and $\bW{B} \in \mathbb{R}^{k \times r}$ is the \emph{reconstruction} matrix, with rank $r \ll \min(d, k)$. Given input $\x \in \mathbb{R}^d$, the output is:
\begin{equation}
    \y = \bW{0}\x + \bW{B}\bW{A}\x.
\end{equation}
Usually, $\bW{A}$ is initialized from an isotropic normal with a small standard deviation, and $\bW{B}$ is initialized to zero, ensuring the adapter contributes no signal at the start of training. Only $\bW{A}$ and $\bW{B}$ are optimized; at inference, the adapter is merged into the pre-trained weights as $\bW{} = \bW{0} + \bW{B}\bW{A}$, incurring no additional inference cost.

\subsection{Bayesian Adaptive LoRA}

To incorporate uncertainty into LoRA, we treat the entries of the reduction matrix as random variables rather than fixed parameters. We indicate with $\w{A}$ the random reduction matrix. To obtain the random reduction matrix, we perturb each entry of the deterministic reduction matrix $\bW{A}$ by input-dependent Gaussian noise as done in Variational Adaptive Dropout~\citep{coscia2025blips}:
\begin{equation}\label{eqn:posterior}
\begin{split}
    \omega_{A;ij} &= \W{A;ij} + \sqrt{\alpha(\x)\W{A;ij}^2} \bm{\epsilon}_{ij}, \quad \bm{\epsilon}_{ij}\sim\mathcal{N}(0,1). \\
    \iff
    q(\w{A}\mid\x)&=\prod_{i=1}^r\prod_{j=1}^d \mathcal{N}(\W{A;ij}, \alpha(\x)\W{A;ij}^2)
\end{split}
\end{equation}
Here, $\alpha(\x)$ is a lightweight inference network that modulates the overall scale of the variance as a function of the input $\x$. Intuitively, weights with larger magnitude carry proportionally more uncertainty, with $\alpha$ controlling how much the input itself drives that uncertainty.

Under this posterior, since $\bW{B}$ is deterministic, the predictive distribution of the output given the input is exactly Gaussian (see Appendix~\ref{sec:predictivedistr} for derivations):
\begin{equation}
    \y \sim \mathcal{N}\Big( \bW{0}\x + \bW{B}\bW{A}\x,\; 
    \alpha\,\bW{B}\,\mathrm{diag}(\bW{A}^2\x^2) \,\bW{B}^\top \Big),
\end{equation}
where squares are applied element-wise. Crucially, the mean coincides with the standard LoRA forward pass, so at inference the weights can still be merged as $\bW{} = \bW{0} + \bW{B}\bW{A}$, incurring no additional inference cost. Additionally, due to the multiplicative noise choice, the only added parameters with respect to LoRA come from the inference network, which is usually light-weight and negligible. 

\paragraph{Training and KL divergence.}
Variational Adaptive Dropout trains the Bayesian model by maximising the Evidence Lower Bound, which trades off between data fit (the expected log-likelihood term) and model complexity (KL divergence between the posterior and the prior). The ELBO is given by:
\begin{equation}\label{eqn:elbo}
\mathcal{L} = \mathbb{E}_{\w{A}\sim q(\w{A}\mid\x)}\left[
\log p(\mathbf{y} \mid \w{A}, \x)\right] - D_{KL}\left[q(\w{A}\mid\x)\mid p(\w{A})
\right].
\end{equation}
In practice, a single Monte Carlo sample from the variational posterior at each training iteration can well approximate \eq{eqn:elbo}. We used the same weakly informed prior of \citet{coscia2025blips}:
\begin{equation}\label{eqn:prior}
    p(\w{A}) = \prod_{i=1}^r\prod_{j=1}^d \mathcal{N}\left(0, \frac{p}{1-p}\W{A;ij}^2\right),
\end{equation}
which can be interpreted as placing dropout to the network with dropout probability $p$. Under this choice, the KL divergence between posterior and prior admits a closed form:
\begin{equation}\label{eqn:kl_loss}
    D_{KL}[q(\w{A}\mid\x)\mid p(\w{A})] = \frac{1}{2}\left(\frac{(\alpha+1)(1+p)}{p}+\log{\frac{p}{1-p}}-\log(\alpha)-1\right),
\end{equation}
which depends only on the scalar $\alpha$ and dropout rate $p$, making it cheap to evaluate and differentiate through. A complete derivation is available in Appendix~\ref{klproof}.

\subsection{Low-Rank local reparametrization trick}
The local reparametrization trick~\citep{kingma2015variational} was introduced to sample the predictive distribution of Bayesian linear layers efficiently. For a single linear layer with input $\x$ and weight matrix $\w{} \sim \mathcal{N}(\bm{\theta}, \alpha \bm{\theta}^2)$, 
the output $\bm{y}$ also follows a normal distribution:
\begin{equation}
\y \sim \mathcal{N}(\bm{\gamma}, \mathrm{diag}(\bm{\delta})),
\end{equation}
where each output element $y_j$ can be written as
\begin{equation}
y_{j} = \gamma_{j} + \sqrt{\delta_{j}}\, \epsilon_{j},\quad\gamma_{j} = \sum_i x_i \theta_{ij}, \quad \delta_{j} = \alpha \sum_i x_i^2 \theta_{ij}^2,  \quad \epsilon_{j} \sim \mathcal{N}(0,1).
\end{equation}
Therefore, rather than sampling the weight matrix $\w{}$ directly, one samples the output $\bm{y}$ from a Gaussian whose mean and variance are computed analytically, reducing both variance and computational cost to two matrix multiplications.

\paragraph{Low-Rank Sampling.}
In BaLoRA, a naive application of the local reparametrization trick is not directly 
viable. Because the stochastic reduction matrix $\bW{A}$ is followed by the deterministic 
projection $\bW{B}$, the posterior predictive covariance of the full LoRA update is $\Sigma=\alpha\,\bW{B}\,\mathrm{diag}(\bW{A}^2\x^2)\,\bW{B}^\top$, which is of size 
$\mathcal{O}(Bk^2)$ for a batch of size $B$ and output dimension $k$. Explicitly forming and sampling from this matrix is computationally prohibitive, even for moderate $k$.

We instead exploit the low-rank structure of the covariance to derive an efficient sampling procedure. Define the element-wise variance vector 
$\mathbf{d}(\x) = \alpha\,\bW{A}^2\x^2 \in \mathbb{R}^r$. The covariance then factors as:
\begin{equation}
    \Sigma = \bW{B}\,\mathrm{diag}(\mathbf{d}(\x))\,\bW{B}^\top = \mathbf{A}(\x)\mathbf{A}(\x)^\top, 
    \quad \mathbf{A}(\x) = \bW{B}\,\mathrm{diag}\!\big(\sqrt{\mathbf{d}(\x)}\big),
\end{equation}
where we used the identity $\mathrm{diag}(\mathbf{d}) = \mathrm{diag}(\sqrt{\mathbf{d}})\,\mathrm{diag}(\sqrt{\mathbf{d}})$. 
Drawing $\boldsymbol{\epsilon}_d \sim \mathcal{N}(\mathbf{0}, \mathbf{I}_r)$, an exact sample from the posterior predictive is then obtained by (see Appendix~\ref{sec:predictivedistr} for derivations):
\begin{equation}
    \y = \bW{0}\x + \bW{B}\bW{A}\x + \bW{B}\big(\sqrt{\mathbf{d}(\x)} \odot \boldsymbol{\epsilon}_d\big),
\end{equation}
where $\odot$ denotes element-wise multiplication. Crucially, the noise is sampled in 
the low-dimensional space $\mathbb{R}^r$ before being projected up by $\bW{B}$, so the full covariance matrix is never materialised. The resulting complexity is $\mathcal{O}(Bkr)$ in time and $\mathcal{O}(Br + Bk)$ in memory, matching the scaling of standard LoRA inference and adding only a constant factor overhead over it. This stands in contrast to $\mathcal{O}(Bk^2)$ for naive sampling of the full covariance, making BaLoRA practical even at large output dimensions where $k \gg r$.

\subsection{Geometric Interpretation and Implicit Regularisation of BaLoRA}
It is instructive to interpret BaLoRA geometrically. Standard LoRA constrains the weight update to a low-dimensional subspace of dimension $r \ll k$: the update $\bW{B}\bW{A}\x$ first projects the input onto the $r$-dimensional row space of $\bW{A}$, and then lifts the result back to the $k$-dimensional output space via $\bW{B}$. The expressivity of the adaptation is therefore deliberately restricted to the column space of $\bW{B}$, denoted $\mathcal{V} = \mathrm{col}(\bW{B}) \subset \mathbb{R}^k$.

BaLoRA preserves this structure while equipping it with an uncertainty estimate. The noise term $\bW{B}\big(\sqrt{\mathbf{d}(\x)} \odot \boldsymbol{\epsilon}_d\big)$ defines a random perturbation that, by construction, also lies in $\mathcal{V}$. Geometrically, the predictive distribution can be understood as placing an \emph{input-adaptive ellipsoid} around the LoRA point estimate, confined entirely to the subspace $\mathcal{V}$:
\begin{equation}
    \y = \underbrace{\bW{0}\x}_{\text{pretrained}} 
    + \underbrace{\bW{B}\bW{A}\x}_{\text{LoRA update}} 
    + \underbrace{\bW{B}\big(\sqrt{\mathbf{d}(\x)} \odot \boldsymbol{\epsilon}_d\big)
    }_{\text{uncertainty in }\mathcal{V}}.
\end{equation}
The shape of this ellipsoid (Figure \ref{fig:figure1} \emph{Left}) is governed by the diagonal matrix $\mathrm{diag}(\mathbf{d}(\x)) = \alpha(\x)\,\mathrm{diag}(\bW{A}^2\x^2)$, which has two geometrically meaningful properties. First, the ellipsoid is \emph{aligned with the LoRA basis}: its principal axes correspond to the columns of $\bW{B}$, so uncertainty is expressed in the same directions as the adaptation itself. Second, the ellipsoid is \emph{input-adaptive}: its extent along each axis scales with $\mathbf{d}(\x)$, meaning that inputs which produce large activations in the reduction matrix $\bW{A}$ are assigned wider uncertainty, while inputs that lie in the null space of $\bW{A}$ carry no uncertainty.

We postulate that this geometric structure eases learning in two ways. First, by confining stochastic perturbations to $\mathcal{V}$, noise is prevented from propagating into the pretrained representations outside the adapted subspace, preserving the stability of directions the model has not chosen to adapt. Second, within $\mathcal{V}$, the input-adaptive scaling has a multiplicative noise form, which induces larger perturbations precisely where activations are strongest, discouraging overconfident updates in the most active directions and therefore acting as an implicit regulariser. Together, these effects allow $\alpha(\x)$ to focus on identifying genuinely ambiguous inputs within the adapted subspace.

\section{Experiments}
To evaluate BaLoRA, we design experiments spanning three domains: natural language, computer vision, and materials science. We first demonstrate that introducing a Bayesian framework enhances LoRA's optimization behavior under fixed hyperparameter settings, assessed on commonsense reasoning benchmarks. On these tasks, we also compare BaLoRA against a range of Parameter-Efficient Fine-Tuning baselines using LLaMA2-7B and LLaMA3-8B, achieving leading performance across multiple settings, alongside an ablation study examining the role of the prior probability. We subsequently broaden our scope to computer vision, evaluating BaLoRA against alternative PEFT methods when adapting a Vision Transformer to several image classification benchmarks. We also examine BaLoRA's uncertainty calibration and carry out a runtime and computational overhead comparison with other PEFT approaches. Finally, we turn to materials science, applying BaLoRA to MOF property prediction. Here, BaLoRA not only surpasses standard LoRA in predictive accuracy but also yields superior uncertainty estimates compared to a LoRA ensemble. Experimental details and hyperparameters can be found in Appendix~\ref{appendix:exp}.

\subsection{Commonsense Reasoning}
We benchmark against prompt-based approaches, specifically Prompt Tuning~\citep{lester2021power} and P-tuning~\citep{liu2022p}, as well as low-rank adaptation methods, namely LoRA~\citep{hu2022lora}, DoRA~\citep{liu2024dora}, MORA~\citep{jiang2024mora}, and HiRA~\citep{huang2025hira}. All experiments are conducted on two open-source Large Language Models (LLMs): Llama-2-7B~\citep{touvron2023llama} and Llama-3-8B~\citep{grattafiori2024llama}. The commonsense reasoning evaluation covers 8 sub-tasks, each with its own predefined train/test split. Unless noted otherwise, we adopt the experimental setup and hyperparameter configuration of~\citet{huang2025hira}. We evaluate commonsense reasoning following the evaluation protocol of \citet{hu2023llm, liu2024dora}, where accuracy is used as the performance metric. For each query, the model's response is determined by identifying the first occurrence of a task-specific keyword (e.g., ``true'' or ``false'' for BoolQ); if no such keyword is found, the response is marked incorrect. 
\paragraph{BaLoRA Training Dynamics Improve LoRA performance}
To isolate the effect of our Bayesian components, we conduct a controlled comparison between LoRA and BaLoRA under identical settings. We set the same weight initialization, dropout probability, and hyperparameters, therefore differing only in the stochastic adaptive injection and KL regularization introduced by BaLoRA. As shown in Table~\ref{tab:peft_comparison_lora}, BaLoRA consistently outperforms LoRA by an average margin of $\sim2.9\%$ on Llama-2 and $\sim5\%$ on Llama-3, supporting our hypothesis that exploiting geometric structure facilitates optimization.
\begin{table}[ttb]
\centering
\caption{Accuracy comparison between standard LoRA and BaLoRA using equivalent dropout coefficients. For LoRA, dropout is applied directly to adapter weights; for BaLoRA, the dropout coefficient serves as $p$ in the KL divergence loss formulation (equation~\eqref{eqn:kl_loss}). \textcolor{blue}{Blue} indicates percentage improvement over LoRA, \textcolor{red}{red} indicates percentage decrease.}
\label{tab:peft_comparison_lora}
\resizebox{\textwidth}{!}{%
\begin{tabular}{llccccccccc}
\toprule
\textbf{Model} & \textbf{Method} &  \textbf{BoolQ} & \textbf{PIQA} & \textbf{SIQA} & \textbf{ARC-c} & \textbf{ARC-e} & \textbf{OBQA} & \textbf{HellaS} & \textbf{WinoG}\\
\midrule
\multirow{2}{*}{Llama-2-7B} & LoRA  & 69.80 & 79.90 & \textbf{79.50} & 64.70 & 79.80 & 81.00 & 83.60 & 82.60\\
& \textbf{BaLoRA} & \textbf{71.59}$^{\textcolor{blue}{+2.6\%}}$ & \textbf{83.73}$^{\textcolor{blue}{+4.8\%}}$ & 77.12$^{\textcolor{red}{-3.0\%}}$ & \textbf{69.45}$^{\textcolor{blue}{+7.3\%}}$ & \textbf{84.39}$^{\textcolor{blue}{+5.8\%}}$ & \textbf{81.40}$^{\textcolor{blue}{+0.5\%}}$ & \textbf{87.73}$^{\textcolor{blue}{+4.9\%}}$ & \textbf{82.72}$^{\textcolor{blue}{+0.1\%}}$ \\
\midrule
\multirow{2}{*}{Llama-3-8B} & LoRA  & 70.80 & 85.20 & 79.90 & 71.20 & 84.20 & 79.00 & 91.70 & 84.30\\
& \textbf{BaLoRA}  & \textbf{75.29}$^{\textcolor{blue}{+6.3\%}}$ & \textbf{88.52}$^{\textcolor{blue}{+3.9\%}}$ & \textbf{80.45}$^{\textcolor{blue}{+0.7\%}}$ & \textbf{78.58}$^{\textcolor{blue}{+10.4\%}}$ & \textbf{89.73}$^{\textcolor{blue}{+6.6\%}}$ & \textbf{84.20}$^{\textcolor{blue}{+6.6\%}}$ & \textbf{95.46}$^{\textcolor{blue}{+4.1\%}}$ & \textbf{85.24}$^{\textcolor{blue}{+1.1\%}}$ \\
\bottomrule
\end{tabular}
}
\end{table}

\paragraph{BaLoRA Achieves State-of-the-Art on Commonsense Reasoning}
Table~\ref{tab:peft_comparison} summarizes accuracy across eight commonsense reasoning benchmarks. BaLoRA ranks first on five out of eight tasks for Llama-2-7B and six out of eight tasks for Llama-3-8B, while remaining within a comparable parameter budget to all low-rank baselines.
The additional traiable parameter count (\emph{Params} in Table~\ref{tab:peft_comparison}) is attributed to the inference network predicting $\alpha$, and is negligable compared to the baseline models. Against the closest competitor, HiRA, BaLoRA delivers steady gains across the majority of sub-tasks, suggesting that the Bayesian geometric prior provides an advantage over purely deterministic rank decomposition. Taken together, these results establish BaLoRA as a robust and parameter-efficient alternative to existing PEFT methods. To the best of our knowledge, we are the first method leading improvements driven by principled uncertainty modeling rather than increased model capacity.

\begin{table}[ttb]
\centering
\caption{Accuracy comparison among various PEFT methods on commonsense reasoning datasets. Results for ChatGPT, LoRA, and DoRA are sourced from \cite{huang2025hira}. The best performance within each LLM is indicated in \textbf{bold}, while the second best performance is highlighted in \underline{underline}.}
\label{tab:peft_comparison}
\resizebox{\textwidth}{!}{%
\begin{tabular}{llcccccccccc}
\toprule
\textbf{Model} & \textbf{Method} & \textbf{Params} (\%) & \textbf{BoolQ} & \textbf{PIQA} & \textbf{SIQA} & \textbf{ARC-c} & \textbf{ARC-e} & \textbf{OBQA} & \textbf{HellaS} & \textbf{WinoG}\\
\midrule
ChatGPT & - & - & 73.10 & 85.40 & 68.50 & 79.90 & 89.80 & 74.80 & 78.50 & 66.10 \\
\midrule
\multirow{7}{*}{Llama-2-7B} & Prompt Tuning & 0.0012 & 55.93 & 12.35 & 30.50 & 6.06 & 8.63 & 9.40 & 6.91 & 40.57\\
& P-Tuning & 0.7428 & 58.75 & 36.02 & 0.20 & 0.17 & 1.98 & 0.80 & 0.01 & 0.00 \\
& LoRA & 0.8256 & 69.80 & 79.90 & 79.50 & 64.70 & 79.80 & 81.00 & 83.60 & 82.60\\
& DoRA & 0.8256 & 71.80 & \underline{83.70} & 76.00 & 68.20 & 83.70 & 82.40 & \underline{89.10} & 82.60 \\
& MoRA & 0.8241 & \underline{72.17} & 80.79 & \underline{79.53} & 71.42 & 85.31 & 81.20 & 29.09 & 80.19 \\
& HiRA & 0.8256 & 71.22 & 83.35 & \underline{79.53} & \textbf{73.81} & \textbf{86.74} & \underline{84.60} & 88.12 & \textbf{83.98} \\
& \textbf{BaLoRA} & 0.8427 & \textbf{72.69} & \textbf{84.93} & \textbf{80.45} & \underline{72.10} & \underline{85.56} & \textbf{85.40} & \textbf{91.23} & \underline{83.66} \\
\midrule
\multirow{7}{*}{Llama-3-8B} & Prompt Tuning & 0.0010 & 56.85 & 45.05 & 36.13 & 31.57 & 32.74 & 29.20 & 14.01 & 50.12\\
& P-Tuning & 0.6240 & 59.97 & 11.64 & 8.19 & 7.42 & 8.63 & 9.60 & 1.77 & 37.65\\
& LoRA & 0.7002 & 70.80 & 85.20 & 79.90 & 71.20 & 84.20 & 79.00 & 91.70 & 84.30\\
& DoRA & 0.7002 & 74.60 & 89.30 & 79.90 & 80.40 & 90.50 & 85.80 & \underline{95.50} & 85.60 \\
& MoRA & 0.6997 & 74.28 & 87.43 & 80.71 & 79.61 & 91.16 & 85.60 & 43.53 & 86.74 \\
& HiRA & 0.7002 & \underline{75.40} & \underline{89.70} & \underline{81.15} & \textbf{82.90} & \textbf{93.27} & \underline{88.32} & 95.36 & \underline{87.70} \\
& \textbf{BaLoRA} & 0.7145 & \textbf{76.42} & \textbf{89.99} & \textbf{81.78} & \underline{80.90} & \underline{91.20} & \textbf{88.40} & \textbf{96.11} & \textbf{88.40} \\
\bottomrule
\end{tabular}
}
\end{table}

\paragraph{BaLoRA is Robust accross Prior Probabilities}
In Figure~\ref{fig:ablation_p} we examine the sensitivity of BaLoRA to the choice of prior probability $p$ in the KL divergence loss of equation~\eqref{eqn:kl_loss} for Llama3 accross BoolQ, PIQA, SIQA and WinoG datasets (see Appendix ~\ref{appendix:exp} for Llama2 and extra datasets). Across both Llama-2 and Llama-3, performance remains largely stable over the full range of tested values, with larger values of $p$ showing marginal accuracy gains across multiple sub-tasks. The variance in average accuracy across $p$ values is narrow, indicating that BaLoRA is robust to this hyperparameter.

\begin{figure}[tbp]
    \centering
    \includegraphics[width=0.8\linewidth]{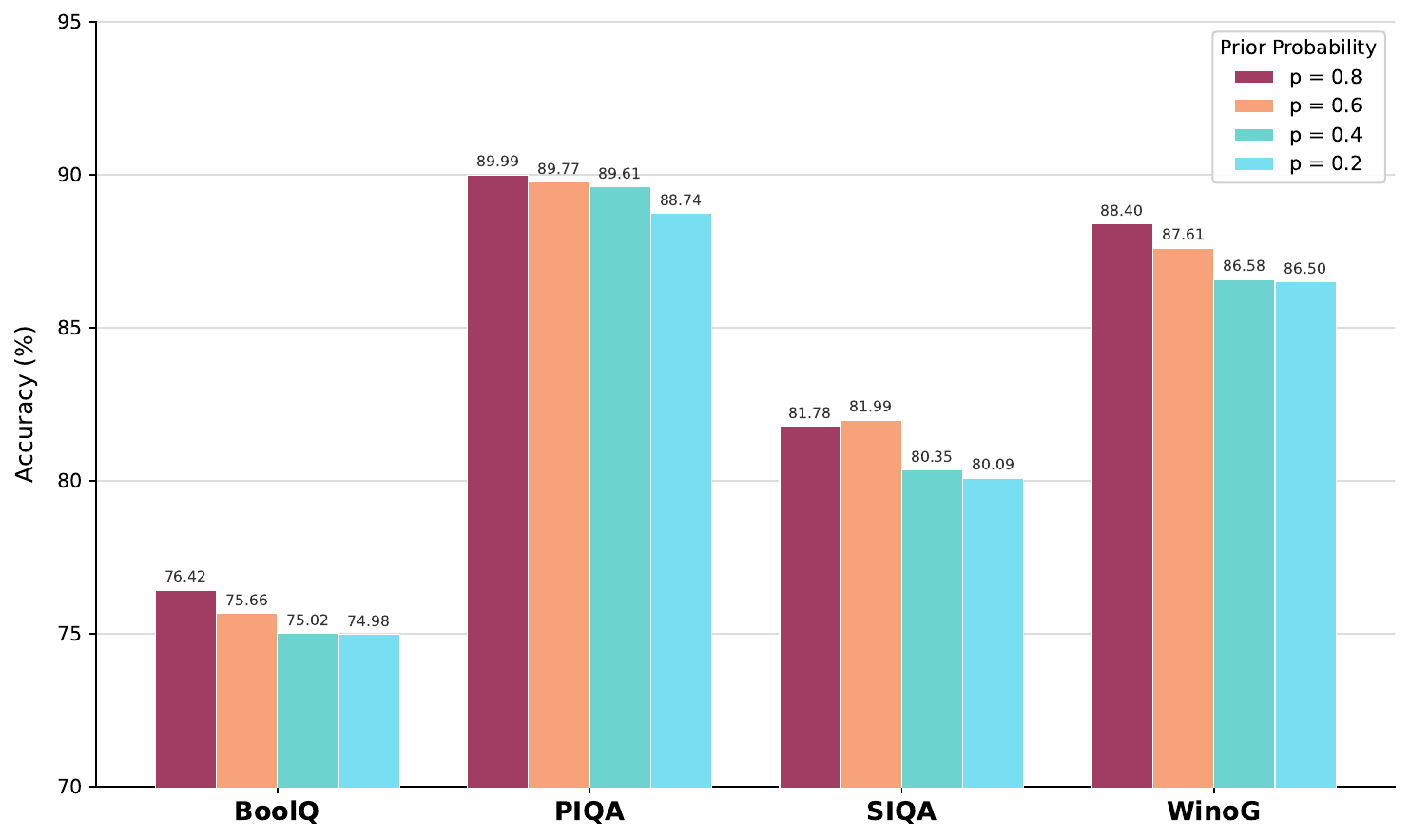}
\caption{Ablation study for LLaMA3 on the effect of prior probability $p$ in the KL divergence loss formulation (equation~\eqref{eqn:kl_loss}) across benchmark tasks. Results demonstrate robustness across different values of $p$.}
    \label{fig:ablation_p}
\end{figure}

\subsection{Visual Perception}

Having established BaLoRA's advantage on language tasks, we turn to visual perception to probe the generality of our approach across domains. We benchmark BaLoRA against LoRA and DoRA, the two most established low-rank adaptation methods in the vision literature, by fine-tuning a ViT-L/16 pretrained on ImageNet-21K~\citep{dosovitskiy2020image} across four image classification benchmarks: CIFAR-10, CIFAR-100, Oxford-IIIT Pets, and Oxford Flowers-102. We follow the standard experimental protocol of~\citet{dosovitskiy2020image} with minor learning rate adjustments, applying adapters exclusively to the query and value projection matrices as is conventional in the \texttt{timm} library. We set $r{=}8$, $\alpha{=}16$ for CIFAR datasets and $r{=}16$, $\alpha{=}32$ otherwise; full hyperparameter details are provided in Appendix~\ref{appendix:additional}. Beyond classification accuracy, we additionally report Expected Calibration Error (ECE), a principled measure of predictive reliability, to assess whether BaLoRA's stochastic training objective yields more calibrated models than its deterministic counterparts.

\paragraph{BaLoRA is Accurate and Calibrated on Image Classification}
\begin{table}[tbp]
\centering
\caption{Accuracy comparison among various PEFT methods on classification datasets for \textbf{ViT-L/16 (ImageNet12K)}. Results for full finetuning are used for reference and are sourced from \citep{dosovitskiy2020image}. The best performance within each ViT is indicated in \textbf{bold}, while the second best performance is highlighted in \underline{underline}.}
\label{tab:peft_comparison_vision_acc}
\begin{tabular}{lccccc}
\toprule
\textbf{Method} &  \textbf{CIFAR10} & \textbf{CIFAR100} & \textbf{Oxford-IIIT Pets} & \textbf{Oxford Flowers-102}\\
\midrule
Full Fine-tuning &  99.15 & 93.25 & 94.67 & 99.61 \\
\midrule
LoRA &  98.93 & \underline{93.59} & 93.96 & 99.44  \\
DoRA & \underline{99.03} & 93.44 & \underline{94.01} & \underline{99.48}  \\
\textbf{BaLoRA} & \textbf{99.04} & \textbf{93.62} & \textbf{94.07} & \textbf{99.61} \\
\bottomrule
\end{tabular}
\end{table}
\begin{table}[tbp]
\centering
\caption{Expected calibration error comparison among various PEFT methods on classification datasets for \textbf{ViT-L/16 (ImageNet12K)}. The best performance within each ViT is indicated in \textbf{bold}, while the second best performance is highlighted in \underline{underline}. Performance is expressed as a percentage.}
\label{tab:peft_comparison_vision_ece}
\begin{tabular}{lccccc}
\toprule
\textbf{Method} &  \textbf{CIFAR10} & \textbf{CIFAR100} & \textbf{Oxford-IIIT Pets} & \textbf{Oxford Flowers-102}\\
\midrule
LoRA &  0.81 & \underline{4.25} & 3.39 & \textbf{0.62}  \\
DoRA & \underline{0.73} & 4.45 & \textbf{3.28} & 0.70  \\
\textbf{BaLoRA} & \textbf{0.71} & \textbf{4.12} & \underline{3.33} & \underline{0.63} \\
\bottomrule
\end{tabular}
\end{table}
We report classification accuracy across four benchmarks in Table~\ref{tab:peft_comparison_vision_acc}. BaLoRA consistently ranks first among all PEFT methods, matching or approaching full fine-tuning performance. Most strikingly, BaLoRA achives state-of-the-art performances for CIFAR100, surpassing even full fine-tuned models; and it closes the gap entirely on Oxford Flowers-102, matching the full fine-tuning ceiling of $99.61\%$ accuracy. Against the strongest PEFT baseline, DoRA, BaLoRA delivers consistent improvements across all four datasets. These results demonstrate that BaLoRA's Bayesian inductive bias is not specific to the language domain but transfers naturally to visual perception, providing a robust and parameter-efficient alternative to deterministic low-rank adaptation.

Beyond predictive accuracy, a well-calibrated model is essential for trustworthy deployment in real-world settings. Table~\ref{tab:peft_comparison_vision_ece} reports the Expected Calibration Error across the four benchmarks. BaLoRA achieves the lowest ECE on two out of four datasets and ranks second on the remaining two, consistently outperforming or matching both LoRA and DoRA without any post-hoc calibration procedure. This suggests that the stochastic adaptive injection and KL regularization jointly act as an implicit regularizer, discouraging overconfident predictions.

\paragraph{BaLoRA adds minimal Computational Cost over LoRA}
Table~\ref{tab:peft_comparison_resources} compares memory and compute overhead across methods during training\footnote{We do not report inference statistics as all models, BaLoRA included, can merge adapters weights with pre-trained weights resulting in zero additional latency/memory cost.}. BaLoRA introduces a modest increase in trainable parameters (1.14M vs. 0.80M for LoRA) and a $\times$1.29 FLOPs overhead relative to full fine-tuning — a direct consequence of the stochastic sampling during the forward pass. Peak memory consumption remains well below full fine-tuning and DoRA, confirming that BaLoRA operates comfortably within the parameter-efficient regime despite its Bayesian formulation.
\begin{table}[htbp]
\centering
\caption{Resources comparison during training among various PEFT methods on CIFAR100 dataset for \textbf{ViT-L/16 (ImageNet12K)}. Results are conducted on a single H100 94Gb for one image input.}
\label{tab:peft_comparison_resources}
\resizebox{\textwidth}{!}{%
\begin{tabular}{lccccccc}
\toprule
\textbf{Method} &  \textbf{Params} (M) & \textbf{Peak Memory Fwd} (Gb) & \textbf{Peak Memory Bwd} (Gb) & \textbf{FLOPs} (G)  \\
\midrule
Full Fine-tuning & 305 & 11.04  & 11.16 & $\times$1  \\
LoRA & 0.80 & 8.13 & 8.15 & $\times$1.003  \\
DoRA & 0.85 & 10.55 & 10.57 & $\times$1.003 \\
\textbf{BaLoRA} & 1.14 & 9.73 & 9.76 & $\times$1.29 \\
\bottomrule
\end{tabular}
}
\end{table}

\subsection{Materials Property Prediction}
After evaluating BaLoRA in language and vision domains, we extend to a scientific setting to assess predictive accuracy and uncertainty quantification, a feature that deterministic low-rank methods lack in regression-based settings. We fine-tune MOFTransformer~\citep{kang2023multi} on the QMOF dataset~\citep{rosen2021machine} for bandgap prediction, an important task in designing MOFs for energy storage applications~\citep{rosen2022high}. Based on preliminary results, we apply adapters only to MLP layers with $r{=}64$ and $\alpha{=}128$. We compare BaLoRA to an ensemble of LoRA models, a strong UQ baseline requiring multiple adapters, while BaLoRA provides calibrated uncertainty from a single adapter. Importantely, training multiple adapters requires multiple fine-tuning runs, while BaLoRA only need one. We evaluate accuracy using Mean Absolute Error (MAE) and uncertainty quality via Spearman correlation between predicted uncertainty and actual error. Full hyperparameter details are provided in Appendix~\ref{appendix:exp}.

\paragraph{BaLoRA is Accurate and Calibrated on Bandgap Prediction}
Figure~\ref{fig:convergence_bandgap} reports MAE and Spearman correlation on the QMOF benchmark. BaLoRA consistently outperforms DoRA and LoRA across both metrics. In terms of accuracy, BaLoRA achieves an MAE of $0.266$eV, narrowing the gap to full fine-tuning ($0.224$eV) while remaining significantly more parameter-efficient (using only $6.75\%$ of trainable parameters). Notably, its predictions more closely match ground truth values in the low bandgap regime ($<2$eV), which is particularly important since lower bandgaps correspond to higher conductivity and are critical for applications such as photocatalysis~\citep{cao2023moformer}.
For uncertainty quantification, BaLoRA achieves the highest Spearman correlation ($0.346$) between predicted uncertainty and actual error, outperforming ensemble DoRA ($0.259$) and LoRA ($0.309$). Interestingly, we find that multiple forward passes at test time further improve uncertainty estimates without requiring additional training (Figure~\ref{fig:convergence_bandgap}). However, all methods exhibit only moderate correlations, underscoring the intrinsic difficulty of reliable uncertainty estimation in this task. This indicates that while BaLoRA improves calibration, bandgap prediction for MOFs remains a challenging setting for uncertainty-aware modeling.

\begin{figure}[tbp]
    \centering
    \includegraphics[width=0.9\linewidth]{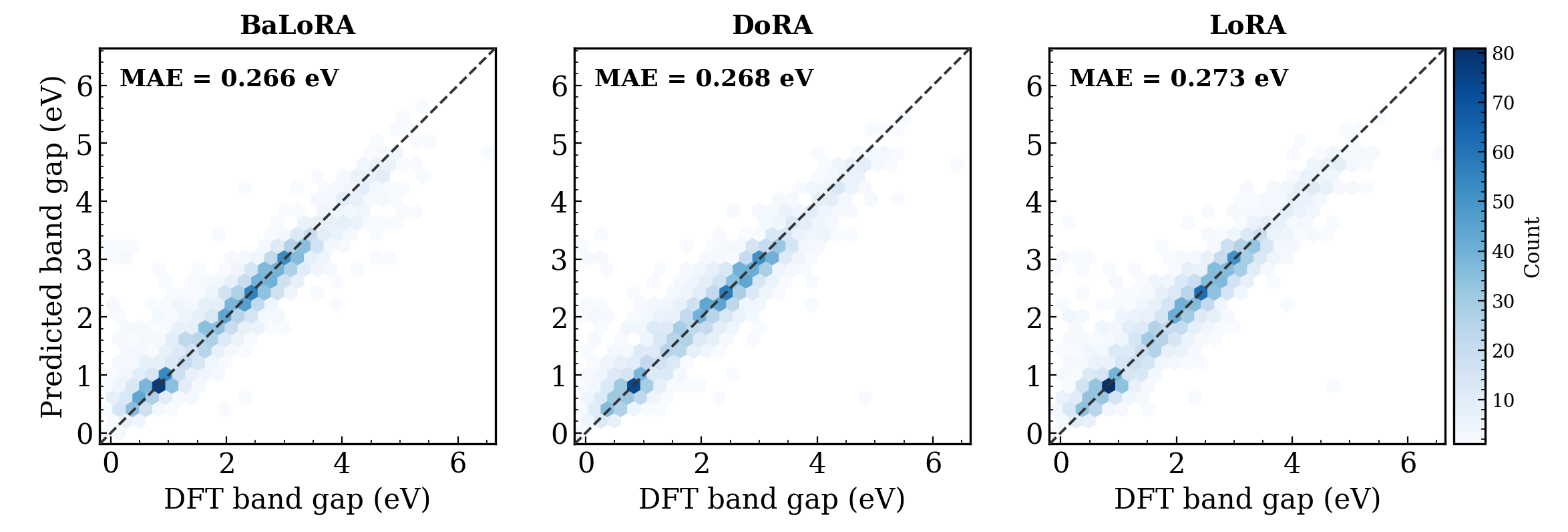}
    \caption{Performance in eV among various PEFT methods on bandgap prediction using \textbf{MOFTransformer} on QMOF dataset.}
    \label{fig:grid_bandgap}
\end{figure}

\section{Conclusions}
We presented BaLoRA, a Bayesian extension of Low-Rank Adaptation that addresses two fundamental limitations of existing PEFT methods: the expressiveness gap relative to full fine-tuning, and the absence of principled uncertainty quantification. By introducing an input-adaptive Bayesian reparametrization of the LoRA reduction matrix, BaLoRA encodes uncertainty during training via adaptive noise injection, which simultaneously acts as an implicit regularizer and enables calibrated predictions at test time. A low-rank local reparametrization trick ensures that the posterior distribution is sampled exactly in the low-dimensional latent space, matching the computational scaling of standard LoRA and incurring no additional inference overhead in deterministic mode.
Across commonsense reasoning, visual perception, and materials property prediction, BaLoRA consistently outperforms LoRA and its deterministic variants while remaining within a comparable parameter and FLOPs budget. On commonsense reasoning, BaLoRA achieves state-of-the-art accuracy on the majority of sub-tasks for both Llama-2-7B and Llama-3-8B, driven by principled uncertainty training rather than increased model capacity. On image classification, it matches full fine-tuning accuracy on Oxford Flowers-102 and achieves strong calibration across all benchmarks without post-hoc correction. Finally, on bandgap prediction for metal-organic frameworks, BaLoRA produces zero-shot uncertainty estimates that correlate more strongly with model error than a trained ensemble of LoRA models, while requiring only a single fine-tuning run.
These results suggest that Bayesian inductive biases, when carefully integrated into the geometry of low-rank adapters, offer complementary and additive advantages over purely deterministic approaches. We hope BaLoRA encourages broader adoption of uncertainty-aware PEFT in scientific and safety-critical applications, where reliable predictions matter as much as accurate ones.

% \begin{ack}
% D. Coscia acknowledges the support provided by PRIN "FaReX - Full and Reduced order modeling of coupled systems: focus on non-matching methods and automatic learning" project
% Acknowledge also NOW cluster
% \end{ack}
\newpage
\bibliographystyle{abbrvnat}
\bibliography{biblio}
%%%%%%%%%%%%%%%%%%%%%%%%%%%%%%%%%%%%%%%%%%%%%%%%%%%%%%%%%%%%

%%%%%%%%%%%%%%%%%%%%%%%%%%%%%%%%%%%%%%%%%%%%%%%%%%%%%%%%%%%%
\newpage
\onecolumn
\appendix
\hrule height 4pt
\vskip 0.15in
\vskip -\parskip
\begin{center}
{\Large\sc Appendix \par}
\end{center}
\vskip 0.29in
\vskip -\parskip
\hrule height 1pt
\vskip 0.4in

\startcontents[sections]\vbox{\sc\Large Table of Contents}\vspace{4mm}\hrule height .5pt
\printcontents[sections]{l}{1}{\setcounter{tocdepth}{2}}
\vskip 4mm
\hrule height .5pt
\vskip 10mm
\newpage

\section{Proofs and Derivations}
This Appendix Section is devoted to the formal proofs introduced in the main text. 
\subsection{Predictive Distribution and Uncertainty Estimates}\label{sec:predictivedistr} 
We begin by deriving the posterior predictive distribution of the output induced by the stochastic LoRA update (Figure~\ref{fig:figure1}).
\begin{proposition}[Predictive Distribution]
Under the posterior
\begin{equation}
q(\w{A}\mid\x)=\prod_{i=1}^r\prod_{j=1}^d 
\mathcal{N}(\W{A;ij}, \alpha(\x)\W{A;ij}^2),
\end{equation}
and with deterministic $\bW{B}$, the output
\begin{equation}
\y = \bW{0}\x + \bW{B}\w{A}\x
\end{equation}
is Gaussian with
\begin{equation}
\y \sim \mathcal{N}\Big(
\bW{0}\x + \bW{B}\bW{A}\x,\;
\alpha(\x)\,\bW{B}\,\mathrm{diag}(\bW{A}^2 \x^2)\,\bW{B}^\top
\Big).
\end{equation}
\end{proposition}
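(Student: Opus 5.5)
Conditionally on $\x$, the proof reduces to the fact that an affine image of a Gaussian vector is Gaussian. I will write $\w{A} = \bW{A} + \mathbf{E}$, where $\mathbf{E}$ has independent entries $E_{ij} = \sqrt{\alpha(\x)}\,|\W{A;ij}|\,\epsilon_{ij}$ and $\epsilon_{ij}\sim\mathcal{N}(0,1)$; this is just the reparametrization in \eq{eqn:posterior}. Then $\y = \bW{0}\x + \bW{B}\bW{A}\x + \bW{B}\mathbf{E}\x$, and the first two terms are deterministic given $\x$. Everything therefore comes down to the distribution of the random vector $\mathbf{E}\x\in\mathbb{R}^r$.

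\textbf{Step 1: the latent vector $\mathbf{u}=\mathbf{E}\x$.} Its $i$-th component is $u_i=\sum_{j=1}^d E_{ij}x_j = \sqrt{\alpha(\x)}\sum_j |\W{A;ij}|\,x_j\,\epsilon_{ij}$. This is a linear combination of independent standard normals, so $u_i$ is Gaussian with mean zero and variance $\alpha(\x)\sum_j \W{A;ij}^2x_j^2 = \alpha(\x)\,(\bW{A}^2\x^2)_i$. For $i\neq i'$, the components $u_i$ and $u_{i'}$ involve disjoint sets of $\epsilon$'s, namely different rows of $\mathbf{E}$. Hence they are independent, and $\mathbf{u}$ is jointly Gaussian, being a linear map of the Gaussian vector $\mathrm{vec}(\boldsymbol{\epsilon})$. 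It follows that $\mathbf{u}\sim\mathcal{N}(\mathbf{0},\mathrm{diag}(\mathbf{d}(\x)))$ with $\mathbf{d}(\x)=\alpha(\x)\bW{A}^2\x^2$, which also gives $\mathbf{u}\overset{d}{=}\sqrt{\mathbf{d}(\x)}\odot\boldsymbol{\epsilon}_d$.

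\textbf{Step 2: push forward through $\bW{B}$.} Since $\y = \mathbf{m} + \bW{B}\mathbf{u}$ with $\mathbf{m}=\bW{0}\x+\bW{B}\bW{A}\x$ fixed, the standard closure of Gaussians under affine maps gives
$\y\sim\mathcal{N}(\mathbf{m},\,\bW{B}\,\mathrm{diag}(\mathbf{d}(\x))\,\bW{B}^\top)$. Here the covariance equals $\alpha(\x)\bW{B}\mathrm{diag}(\bW{A}^2\x^2)\bW{B}^\top$. The same identity, combined with the representation of $\mathbf{u}$ from Step 1, yields the low-rank sampling formula stated earlier as a corollary.

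\textbf{Main obstacle.} The only subtle point is the input dependence of $\alpha(\x)$. The weights are drawn from $q(\w{A}\mid\x)$ after $\x$ is fixed, so all statements are conditional on $\x$, and $\alpha(\x)$ is treated as a constant scalar throughout. A second point is that the covariance may be rank-deficient, since $\mathrm{rank}\le r<k$. The Gaussian is then degenerate, supported on $\bW{0}\x+\bW{B}\bW{A}\x+\mathrm{col}(\bW{B})$. To cover this case I will define Gaussianity via characteristic functions, $\mathbb{E}e^{i\mathbf{t}^\top\y}=\exp(i\mathbf{t}^\top\mathbf{m}-\tfrac12\mathbf{t}^\top\Sigma\mathbf{t})$, rather than via densities. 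This follows directly by substituting $\mathbf{t}^\top\bW{B}\mathbf{u}$ into the characteristic function of $\mathbf{u}$.
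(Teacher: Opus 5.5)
Your proposal is correct and follows essentially the same route as the paper. Both compute the mean and diagonal covariance of the $r$-dimensional latent vector $\w{A}\x$, using independence across the rows of $\w{A}$, and then push it forward through the affine map $\z \mapsto \bW{0}\x + \bW{B}\z$. Your additions are tightenings rather than a different argument: centering via $\mathbf{E}$, justifying joint Gaussianity as a linear image of $\mathrm{vec}(\boldsymbol{\epsilon})$ (the paper only infers it from Gaussian marginals), and treating the rank-deficient covariance with characteristic functions.
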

\begin{proof}
Let $\z = \w{A}\x \in \mathbb{R}^r$, whose components are $z_i = \sum_{j=1}^d \omega_{A;ij}\, x_j$. 
Since the $\omega_{A;ij}$ are independent Gaussian random variables,  each $z_i$ is Gaussian, and thus $\z$ is Gaussian. The moments are computed as follows.

\medskip
\noindent\textbf{Mean.}
Using $\mathbb{E}[\omega_{A;ij}] = \W{A;ij}$,
\begin{equation}
\mathbb{E}[z_i]
= \sum_{j=1}^d \mathbb{E}[\omega_{A;ij}]\,x_j
= \sum_{j=1}^d \W{A;ij}\,x_j
= (\bW{A}\x)_i.
\end{equation}
Hence,
\begin{equation}
\mathbb{E}[\z] = \bW{A}\x.
\end{equation}

\medskip
\noindent\textbf{Covariance.}
For $i \neq k$, independence across rows implies
\begin{equation}
\mathrm{Cov}(z_i, z_k) = 0.
\end{equation}
For the variance of each component,
\begin{equation}
\mathrm{Var}(z_i)
= \sum_{j=1}^d \mathrm{Var}(\omega_{A;ij})\, x_j^2
= \sum_{j=1}^d \alpha(\x)\W{A;ij}^2\, x_j^2.
\end{equation}
Thus,
\begin{equation}
\mathrm{Cov}(\z)
= \alpha(\x)\,\mathrm{diag}\!\Big(
\sum_{j=1}^d \W{A;ij}^2 x_j^2
\Big)_{i=1}^r
= \alpha(\x)\,\mathrm{diag}(\bW{A}^2 \x^2).
\end{equation}

\medskip
\noindent\textbf{Distribution of $\y$.}
Finally,
\begin{equation}
\y = \bW{0}\x + \bW{B}\z,
\end{equation}
which is an affine transformation of a Gaussian vector. Therefore,
\begin{equation}
\mathbb{E}[\y] = \bW{0}\x + \bW{B}\bW{A}\x,
\end{equation}
\begin{equation}
\mathrm{Cov}(\y)
= \bW{B}\,\mathrm{Cov}(\z)\,\bW{B}^\top.
\end{equation}

Substituting $\mathrm{Cov}(\z)$ yields
\begin{equation}
\y \sim \mathcal{N}\Big(
\bW{0}\x + \bW{B}\bW{A}\x,\;
\alpha(\x)\,\bW{B}\,\mathrm{diag}(\bW{A}^2 \x^2)\,\bW{B}^\top
\Big).
\end{equation}
\end{proof}

Given the Gaussian posterior predictive derived above, the remaining challenge is efficient sampling. Direct approaches require constructing and factorizing the full output covariance, which scales quadratically in the output dimension and is prohibitive in practice, as explained in the main text. We next show how the low-rank structure of the LoRA parameterization can be exploited to obtain an exact but computationally efficient sampling procedure.
\begin{proposition}[Low-Rank Sampling]
Let the posterior predictive distribution of $\y$ be
\begin{equation}
\y \sim \mathcal{N}\Big(
\bW{0}\x + \bW{B}\bW{A}\x,\;
\Sigma
\Big),
\end{equation}
with covariance
\begin{equation}
\Sigma = \alpha(\x)\,\bW{B}\,\mathrm{diag}(\bW{A}^2 \x^2)\,\bW{B}^\top.
\end{equation}
Define $\mathbf{d}(\x) = \alpha(\x)\,\bW{A}^2 \x^2 \in \mathbb{R}^r$. Then an exact sample from this distribution can be obtained as
\begin{equation}
\y = \bW{0}\x + \bW{B}\bW{A}\x + \bW{B}\big(\sqrt{\mathbf{d}(\x)} \odot \boldsymbol{\epsilon}_d\big),
\quad \boldsymbol{\epsilon}_d \sim \mathcal{N}(\mathbf{0}, \mathbf{I}_r).
\end{equation}
\end{proposition}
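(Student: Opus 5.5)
The plan is to show that the proposed sample is an affine image of a standard Gaussian vector, and then match its first two moments with those of the target distribution. A Gaussian is determined by its mean and covariance, so this suffices. Throughout, $\x$ is fixed, so $\alpha(\x)$, $\mathbf{d}(\x)$, $\bW{0}$, $\bW{A}$ and $\bW{B}$ are all deterministic.

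First, define the latent vector $\z = \bW{A}\x + \sqrt{\mathbf{d}(\x)}\odot\boldsymbol{\epsilon}_d \in \mathbb{R}^r$. Writing the Hadamard product as $\mathrm{diag}(\sqrt{\mathbf{d}(\x)})\,\boldsymbol{\epsilon}_d$ shows that $\z$ is an affine transformation of $\boldsymbol{\epsilon}_d\sim\mathcal{N}(\mathbf{0},\mathbf{I}_r)$. Hence $\z$ is Gaussian with mean $\bW{A}\x$ and covariance
\begin{equation*}
\mathrm{diag}(\sqrt{\mathbf{d}})\,\mathbf{I}_r\,\mathrm{diag}(\sqrt{\mathbf{d}}) = \mathrm{diag}(\mathbf{d}(\x)) = \alpha(\x)\,\mathrm{diag}(\bW{A}^2\x^2).
\end{equation*}
The square root is well defined because every entry of $\mathbf{d}(\x)$ is nonnegative: $\alpha(\x)\ge 0$, and $\mathbf{d}$ is a sum of squares. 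This is precisely the law of $\z=\w{A}\x$ obtained in the proof of the Predictive Distribution proposition.

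Second, the proposed sample equals $\y = \bW{0}\x + \bW{B}\z$, which is again affine in $\boldsymbol{\epsilon}_d$. It is therefore Gaussian with mean $\bW{0}\x+\bW{B}\bW{A}\x$ and covariance
\begin{equation*}
\bW{B}\,\mathrm{diag}(\mathbf{d}(\x))\,\bW{B}^\top = \mathbf{A}(\x)\mathbf{A}(\x)^\top = \Sigma,
\end{equation*}
using the factorization $\mathbf{A}(\x)=\bW{B}\,\mathrm{diag}(\sqrt{\mathbf{d}(\x)})$ from the main text. Since the mean and covariance agree with those of the target, the two laws coincide, so the sample is exact. Equivalently, one can cite the Predictive Distribution proposition directly, since both constructions give the same Gaussian law for $\z$.

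There is no real obstacle. The only point needing care is that $\Sigma$ may be singular: it has rank at most $r$, and it is degenerate when some $d_i=0$ or when $\bW{B}$ is not of full column rank. Densities therefore cannot be compared. The plan avoids this by working with characteristic functions, or equivalently with the definition of a multivariate Gaussian as an affine image of a standard normal. Under that definition, equality of mean and covariance already determines the law, including in the degenerate case.
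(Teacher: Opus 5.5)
Your proposal is correct and follows the paper's proof essentially step for step: introduce the latent $\tilde{\z} = \bW{A}\x + \sqrt{\mathbf{d}(\x)}\odot\boldsymbol{\epsilon}_d$, observe it is Gaussian with mean $\bW{A}\x$ and covariance $\mathrm{diag}(\mathbf{d}(\x))$, and push it through the affine map $\z \mapsto \bW{0}\x + \bW{B}\z$ to match the target mean and $\Sigma$. Your extra remarks on the nonnegativity of $\mathbf{d}(\x)$ and on the possibly singular $\Sigma$ (handled via the affine-image or characteristic-function definition of a Gaussian) are sound refinements that the paper leaves implicit.
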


\begin{proof}
Let $\boldsymbol{\epsilon}_d \sim \mathcal{N}(\mathbf{0}, \mathbf{I}_r)$ and define
\begin{equation}
\tilde{\z} = \bW{A}\x + \sqrt{\mathbf{d}(\x)} \odot \boldsymbol{\epsilon}_d.
\end{equation}
Since $\boldsymbol{\epsilon}_d$ is standard Gaussian, $\tilde{\z}$ is Gaussian with
\begin{equation}
\mathbb{E}[\tilde{\z}] = \bW{A}\x,
\end{equation}
\begin{equation}
\mathrm{Cov}(\tilde{\z}) = \mathrm{diag}(\mathbf{d}(\x)).
\end{equation}

Now define
\begin{equation}
\y = \bW{0}\x + \bW{B}\tilde{\z}.
\end{equation}
Since $\y$ is an affine transformation of a Gaussian vector, it is Gaussian with mean
\begin{equation}
\mathbb{E}[\y] = \bW{0}\x + \bW{B}\bW{A}\x,
\end{equation}
and covariance
\begin{equation}
\mathrm{Cov}(\y)
= \bW{B}\,\mathrm{Cov}(\tilde{\z})\,\bW{B}^\top
= \bW{B}\,\mathrm{diag}(\mathbf{d}(\x))\,\bW{B}^\top.
\end{equation}

Substituting $\mathbf{d}(\x) = \alpha(\x)\,\bW{A}^2 \x^2$ yields
\begin{equation}
\mathrm{Cov}(\y)
= \alpha(\x)\,\bW{B}\,\mathrm{diag}(\bW{A}^2 \x^2)\,\bW{B}^\top
= \Sigma.
\end{equation}

Therefore, $\y$ has the desired distribution, and the proposed sampling procedure is exact.
\end{proof}
Having derived an exact posterior predictive distribution and an efficient sampling procedure, we now consider two complementary inference modes derived from the same stochastic LoRA formulation. The first is a \emph{deterministic mode}, which replaces the stochastic reduction matrix with its posterior mean. This corresponds to a maximum a posteriori (MAP)-style approximation of the model and enables fast inference, as the resulting LoRA weights can be directly merged into the base model without requiring sampling. The second is a \emph{sampling mode}, where stochastic forward passes are performed using the posterior predictive distribution, enabling the estimation of predictive uncertainty via Monte Carlo sampling.
\paragraph{Deterministic mode.}
In the deterministic setting, we replace the stochastic LoRA parameters with their posterior means. Specifically, for each LoRA layer we obtain the adapted weights
\begin{equation}
\bW{\text{adapted}} = \bW{0} + \bW{B} \bW{A}.
\end{equation}
This recovers the standard LoRA formulation, where the low-rank update $\bW{B} \bW{A}$ is deterministically added to the base weights $\bW{0}$. Importantly, this form allows the adapted weights to be precomputed and merged into the base model prior to inference, it does not require the inference network $\alpha$, and thus it results in no additional computational overhead at test time.

\paragraph{Sampling mode.}
In the sampling setting, we retain the stochasticity of the reduction matrix. For each LoRA layer, we instead use
\begin{equation}
\w{\text{adapted}} = \bW{0} + \bW{B} \w{A},
\end{equation}
where the output $\w{\text{adapted}}
\x$ is sampled (using Low-Rank Sampling) from the learned predictive distribution. Performing multiple stochastic forward passes $s=1,\dots,S$ yields a set of predictive outputs from which we can estimate the predictive mean and variance via Monte Carlo estimation:
\begin{equation}
\hat{\mathbb{E}}[\y] = \frac{1}{S} \sum_{s=1}^S \y^{(s)}, 
\qquad
\hat{\bm{\sigma}}^2 = \frac{1}{S} \sum_{s=1}^S \big(\y^{(s)} - \hat{\mathbb{E}}[\y]\big)^2.
\end{equation}

Moreover, the predictive uncertainty can be further decomposed into epistemic and aleatoric components via law of total variance.
\begin{equation}
\mathrm{Var}(\y \mid \x)=\underbrace{\mathrm{Var}_{\w{}}\!\left(\mathbb{E}[\y \mid \x, \w{}]\right)}_{\text{epistemic}}
+
\underbrace{\mathbb{E}_{\w{}}\!\left[\mathrm{Var}(\y \mid \x, \w{})\right]}_{\text{aleatoric}}.
\end{equation}
In general, $\x$ denotes the model input (e.g., a token sequence for language models or a feature vector in classification settings), processed by a neural backbone. Stochasticity is introduced through the LoRA weights $\w{A}$. The epistemic component captures uncertainty over the adapted parameters, while the aleatoric component reflects the intrinsic predictive uncertainty of the base model conditioned on fixed weights.
\subsection{KL-divergence proof}\label{klproof}
During training we optimise a variational lower bound to jointly maximise predictive performance and control model complexity.  This is achieved by balancing the expected log-likelihood with a KL regulariser between the variational posterior and the prior. In this section we derive the latter.
\begin{proposition}[Closed-form KL-divergence]
Let the variational posterior \( q(\w{A}\mid\x) \) be as defined in \eqref{eqn:posterior} and the prior \( p(\w{A}) \) be as defined in \eqref{eqn:prior}.
Then the KL divergence admits the closed form
$$
D_{KL}\!\left[q(\w{A}\mid\x)\,\|\,p(\w{A})\right]
= \frac{1}{2}\left(\frac{(\alpha(\x)+1)(1+p)}{p} + \log\frac{p}{1-p} - \log\alpha(\x) - 1\right),
$$
which depends only on $\alpha$ and the dropout (prior) rate $p$ hyperparameter.
\end{proposition}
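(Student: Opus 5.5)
The plan is to reduce the KL divergence to a sum of univariate Gaussian KL terms and then evaluate one term in closed form. By \eqref{eqn:posterior} and \eqref{eqn:prior}, both $q(\w{A}\mid\x)$ and $p(\w{A})$ are fully factorised over the entries $(i,j)$, with $i=1,\dots,r$ and $j=1,\dots,d$. Since the KL divergence is additive over independent factors,
$$
D_{KL}\!\left[q(\w{A}\mid\x)\,\|\,p(\w{A})\right]=\sum_{i=1}^r\sum_{j=1}^d D_{KL}\!\left[\mathcal{N}(\W{A;ij},\alpha(\x)\W{A;ij}^2)\,\big\|\,\mathcal{N}\!\left(0,\tfrac{p}{1-p}\W{A;ij}^2\right)\right].
$$
It therefore suffices to compute a single term. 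I would then argue that this term does not depend on $(i,j)$, which is what produces the claimed dependence on $\alpha$ and $p$ only.

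For a single term, I would use the standard identity for univariate Gaussians,
$$
D_{KL}\!\left[\mathcal{N}(\mu_1,\sigma_1^2)\,\|\,\mathcal{N}(\mu_2,\sigma_2^2)\right]=\tfrac12\left(\tfrac{\sigma_1^2+(\mu_1-\mu_2)^2}{\sigma_2^2}+\log\tfrac{\sigma_2^2}{\sigma_1^2}-1\right).
$$
The substitution is $\mu_1=\W{A;ij}$, $\sigma_1^2=\alpha\W{A;ij}^2$, $\mu_2=0$ and $\sigma_2^2=\frac{p}{1-p}\W{A;ij}^2$. The key observation is that every quantity in the formula is a ratio of quantities each proportional to $\W{A;ij}^2$, so the weight cancels (assuming $\W{A;ij}\neq0$). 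The quadratic term becomes $\frac{(\alpha+1)\W{A;ij}^2}{\frac{p}{1-p}\W{A;ij}^2}=\frac{(\alpha+1)(1-p)}{p}$. The log term becomes $\log\frac{p}{1-p}-\log\alpha$. This yields a per-entry value depending only on $\alpha(\x)$ and $p$.

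The main obstacle is reconciling this computation with the exact constants in the statement, and I would flag two points there. First, the direct calculation gives the factor $\frac{(\alpha+1)(1-p)}{p}$, not $\frac{(\alpha+1)(1+p)}{p}$. I would recheck the prior variance convention in \eqref{eqn:prior}; under it, the expression with $(1-p)$ is what the derivation produces, and I would state the result in that form. Second, the full-matrix KL is $rd$ times the per-entry value. The displayed formula should therefore be read either per weight or up to this constant multiplicative factor. That factor only rescales the regulariser and can be absorbed into the ELBO weighting in \eqref{eqn:elbo}. With these two clarifications, the remaining steps are routine algebra.
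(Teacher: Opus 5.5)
Your argument is correct and is essentially the paper's proof: factorise the KL over the $rd$ independent entries, apply the univariate Gaussian KL formula, and cancel the common factor $\theta_{A;ij}^2$. Both of your caveats are genuine and apply equally to the paper's own argument. Its intermediate per-entry expression $\frac{1}{2}\bigl(\frac{\alpha+1}{p/(1-p)}-1+\log\frac{p}{1-p}-\log\alpha\bigr)$ already has $(1-p)$ rather than $(1+p)$, so the stated constant is a typo; and its final step ``summing over all $(i,j)$'' actually gives $rd$ times that value, so the displayed formula holds only per entry, consistent with the paper's later normalisation of the KL by $rd$.
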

\begin{proof}
Since both $q(\w{A}\mid\x)$ and $p(\w{A})$ factorise over entries, the KL divergence decomposes as
\begin{equation}
D_{KL}\!\left[q(\w{A}\mid\x)\,\|\,p(\w{A})\right]
= \sum_{i=1}^r \sum_{j=1}^d 
D_{KL}\!\left[q(\w{A;ij}\mid\x)\,\|\,p(\w{A;ij})\right].
\end{equation}

From \eqref{eqn:posterior} and \eqref{eqn:prior}, we identify
\[
q(\w{A;ij}\mid\x)=\mathcal{N}\!\big(\W{A;ij},\,\alpha(\x)\W{A;ij}^2\big), 
\quad
p(\w{A;ij})=\mathcal{N}\!\big(0,\,\tfrac{p}{1-p}\W{A;ij}^2\big).
\]

Using the Gaussian KL formula and substituting $\sigma_q^2=\alpha(\x)\W{A;ij}^2$ and $\sigma_p^2=\frac{p}{1-p}\W{A;ij}^2$, the factor $\W{A;ij}^2$ cancels in all terms, yielding
\begin{equation}
D_{KL}\!\left[q(\w{A;ij}\mid\x)\,\|\,p(\w{A;ij})\right] = \frac{1}{2}\left(
\frac{\alpha(\x)+1}{\frac{p}{1-p}}
-1
+\log\frac{p}{1-p}
-\log \alpha(\x)
\right).
\end{equation}

Summing over all $(i,j)$ gives the final expression in \eqref{eqn:kl_loss}.
\end{proof}
In practice, we use a \emph{normalized} version of this KL term by dividing by the number of parameters $rd$ and KL maximum value, so that the regularisation strength is independent of the network width and the maximum KL value equals one.
\section{Experimental Details and Hyperparameters}\label{appendix:exp}
This Appendix Section is devoted to dataset, metrics  and architecture hyperparameters used in the main text for experiments. All our models have been trained on 94GB H100 Nvidia GPUs. All models are trained on the training dataset and evaluated on the test dataset. The validation dataset (if available) is used to select the best checkpoint otherwise the last training checkpoint is used. 

\subsection{Commonsense Reasoning}
The commonsense reasoning task covers 8 sub-tasks (BoolQ, PIQA, SIQA, ARC-c, ARC-e, OBQA, HellaS, and WinoG), each with its own predefined train/test split. The goal of the task is to correctly answer a given multiple-choice question. The models are trained and evaluated as sequence-based models.

\paragraph{Dataset Generation and Availability.}
We adopt the benchmark protocol of \citet{hu2023llm}, directly using their released dataset construction and evaluation setup without modification. In particular, we use their Commonsense170K fine-tuning data, which was constructed by formatting the training sets of BoolQ, PIQA, SIQA, HellaSwag, WinoGrande, ARC-e, ARC-c, and OBQA with predefined task-specific templates. Each dataset is treated as a separate commonsense reasoning task, and all examples are converted into a unified structured prompt consisting of a task description, followed by the input context and multiple-choice options. The datasets and propt-templates are publically available at: \href{https://github.com/AGI-Edgerunners/LLM-Adapters}{https://github.com/AGI-Edgerunners/LLM-Adapters}.

\paragraph{Metrics.}
The main evaluation metric is accuracy. Accuracy is defined as the proportion of correctly answered queries over the total number of queries in the evaluation set. In practice, model predictions are obtained via autoregressive generation with nucleus and beam decoding. Specifically, we use temperature sampling with temperature $0.1$, top-$p$ sampling with $p=0.75$, top-$k$ filtering with $k=40$, and beam search with $4$ beams, as done in \citet{huang2025hira}. The model generates up to 32 new tokens conditioned on the input prompt. The generated sequence is then post-processed by extracting the model’s response segment and parsing the first occurrence of a task-specific keyword using a deterministic rule-based function. This extracted token is treated as the predicted label $\hat{y}_i$. If no valid keyword is found, the prediction is marked as incorrect. Accuracy is then computed over the full evaluation set as
\begin{equation}
\mathrm{Acc} = \frac{1}{N} \sum_{i=1}^{N} \mathbb{I}[\hat{y}_i = y_i],
\end{equation}
where $N$ is the number of samples, $y_i$ is the ground-truth label, and $\mathbb{I}[\cdot]$ is the indicator function.

\paragraph{Architecture and Hyperparameters.}
We adopt the same experimental setup as \citet{huang2025hira}, training for 3 epochs using the AdamW optimizer~\citep{loshchilov2017decoupled} with a linear learning-rate scheduler and a warmup of 100 gradient steps, an effective batch size of 16, a learning rate of $1\times10^{-4}$ for Llama 3 and $2\times10^{-4}$ for Llama 2, and no weight decay. The pretrained backbone model is loaded in 8-bit quantization to reduce memory consumption while preserving inference quality, and training is performed in mixed precision (bf16). Following \citet{huang2025hira}, we apply LoRA adapters to the attention and MLP projection matrices using rank $32$ and lora alpha of $64$.

Our BaLoRA inference network predicts layer-wise uncertainty coefficients conditioned on the current input sequence. It consists of a frozen transformer encoder (i.e., the pretrained backbone model, either Llama 2 or 3) used solely for feature extraction, followed by a lightweight MLP that maps the final hidden representation of the last token to a set of coefficients over LoRA layers. The forward computation can be written as
\begin{equation}
\mathbf{x}\rightarrow \text{Frozen Transformer}
\rightarrow h_{\text{last}}
\rightarrow \text{MLP}(d \rightarrow 256 \rightarrow 256 \rightarrow L)
\rightarrow \text{Softplus} \rightarrow \boldsymbol{\alpha},
\end{equation}
where $L$ denotes the number of LoRA layers and $\boldsymbol{\alpha} \in \mathbb{R}^{L}$ are the predicted coefficients. The transformer encoder is kept entirely frozen and serves only as a feature extractor for the current input sequence, while all trainable parameters reside in the MLP.

\subsection{Visual Perception}
The visual perception task covers 4 image classification benchmarks (CIFAR-10, CIFAR-100, Oxford-IIIT Pets, and Oxford Flowers-102), each with its own standard train/test split. The goal of the task is to correctly classify a given input image into one of the predefined categories. The models are fine-tuned and evaluated using a Vision Transformer backbone with a linear classification head.

\paragraph{Dataset Generation and Availability.}
We evaluate BaLoRA on four standard image classification benchmarks: CIFAR-10, CIFAR-100, Oxford-IIIT Pets, and Oxford Flowers-102. We follow the standard dataset splits provided by each benchmark, using the official training and test partitions without modification. All models are fine-tuned on images resized to $224 \times 224$ and normalized using ImageNet mean and standard deviation. Data augmentation during training includes random horizontal flipping, random rotation, and color jittering, while evaluation is performed without augmentation.

\paragraph{Metrics.} We report Top-1 classification accuracy and Expected Calibration Error (ECE) as evaluation metrics. Accuracy is computed as the fraction of correctly classified samples over the full test set:
\begin{equation}
\mathrm{Acc} = \frac{1}{N} \sum_{i=1}^{N} \mathbb{I}\big[\hat{y}_i = y_i\big],
\end{equation}
where $N$ is the number of test samples, $y_i$ is the ground-truth label, $\hat{y}_i$ is the predicted label, and $\mathbb{I}[\cdot]$ is the indicator function.

ECE is computed using 15 bins with L1 normalization. Let $B_m$ denote the set of indices in bin $m$, $\mathrm{acc}(B_m)$ the empirical accuracy in that bin, and $\mathrm{conf}(B_m)$ the average predicted confidence. Then:
\begin{equation}
\mathrm{ECE} = \sum_{m=1}^{15} \frac{|B_m|}{N} \left| \mathrm{acc}(B_m) - \mathrm{conf}(B_m) \right|.
\end{equation}

\paragraph{Architecture and Hyperparameters.}
We optimize using AdamW with learning rate $10^{-3}$, no weight decay, batch size $512$, and linear scheduling with 10\% warmup. Gradient clipping is applied with a maximum norm of 1.0. All models are trained in mixed precision (bf16) using 8-bit loading for the backbone encoder. We follow the PEFT protocol of \citet{dosovitskiy2020image} and apply LoRA adapters to the query and value projection matrices of all transformer attention layers, as implemented in the \texttt{timm} library. We use rank $r \in \{8, 16\}$ and scaling factor $\alpha \in \{16, 32\}$ depending on dataset complexity, with no additional modifications to the pretrained backbone.

Our backbone model is a Vision Transformer (ViT-L/16) pretrained on ImageNet-21K~\citep{dosovitskiy2020image}. The model processes images as sequences of $16 \times 16$ patches and produces a sequence of token embeddings of dimension $d$. A classification token (CLS) is prepended to the patch sequence and used for downstream prediction.

The BaLoRA inference network uses a second Vision Transformer encoder as a frozen feature extractor. This encoder shares the same architectural family as the backbone (ViT), but is instantiated independently and loaded in 8-bit precision for memory efficiency. Unlike the backbone, this encoder is never updated and serves only to extract a global image representation from the CLS token. We ablate (see Figure~\ref{fig:vision_ablation}) several encoder choices for this module and select DeiT-Base~\citep{touvron2021training}, distilled on ImageNet-1k, as it performs best on validation. In particular, given an input image $\mathbf{x}$, the inference newtork computes:
\[
\mathbf{x} \rightarrow \text{Frozen ViT Encoder} \rightarrow h_{\text{CLS}} \in \mathbb{R}^d
\rightarrow \text{MLP}(d \rightarrow 256 \rightarrow 256 \rightarrow L)
\rightarrow \text{Softplus} \rightarrow \boldsymbol{\alpha},
\]
where $L$ is the number of LoRA layers. The ViT encoder is kept entirely frozen and serves only as a feature extractor for the current input image, while all trainable parameters reside in the MLP.

\subsection{Materials Property Prediction}
The materials property prediction task targets bandgap regression on the QMOF database. The goal of the task is to predict the DFT-computed electronic bandgap (in eV) of a given metal--organic framework structure. The models are fine-tuned and evaluated using a multimodal MOFTransformer backbone that jointly processes crystal graph and three-dimensional energy grid representations.

\paragraph{Dataset Generation and Availability.}
We evaluate BaLoRA on the QMOF bandgap prediction task using the QMOF database~\citep{rosen2021machine}, a standard regression benchmark for metal--organic framework (MOF) property prediction. The dataset consists of DFT-computed electronic bandgap values (in eV) for crystalline MOF structures. We follow the standard train/validation/test split provided by the benchmark without modification. The dataset is publicly available through the \texttt{moftransformer} package~\citep{kang2023multi}.

\paragraph{Metrics.}
The primary evaluation metric is Mean Absolute Error (MAE), computed between model predictions and ground-truth bandgap values on the test set:
\begin{equation}
\mathrm{MAE} = \frac{1}{N}\sum_{i=1}^{N} \lvert \hat{y}_i - y_i \rvert,
\end{equation}
where $N$ is the number of test samples, $y_i$ is the ground-truth bandgap, and $\hat{y}_i$ is the predicted value. Predictions are denormalized using the training-set mean $\mu = 2.0899$ and standard deviation $\sigma = 1.1295$ before computing the metric. For uncertainty estimation, we additionally assess uncertainty calibration quality. Specifically, we perform $T = 100$ stochastic forward passes at test time, compute the per-sample predictive variance, and report the Spearman rank correlation between the predictive variance and the squared prediction error. A higher correlation indicates that the model's uncertainty estimates are better aligned with its actual errors, reflecting more reliable uncertainty quantification.

\paragraph{Architecture and Hyperparameters.}
The backbone model is MOFTransformer~\citep{kang2023multi}, a multimodal transformer that jointly processes crystal graph and three-dimensional energy grid representations of MOF structures. Following preliminary experiments, we apply LoRA adapters to the MLP projection matrices of all transformer blocks, using rank $r = 64$ and scaling factor $\alpha = 128$. Training is performed for $20$ epochs using the AdamW optimizer~\citep{loshchilov2017decoupled} with learning rate $5 \times 10^{-4}$, no weight decay, batch size $32$, and a linear learning-rate schedule with $5\%$ warmup. Gradient clipping is applied with a maximum norm of $1.0$, and training uses mixed precision (bf16). For LoRA and DoRA baselines, we apply a dropout rate of $0.1$ on the adapter weights; for BaLoRA, dropout is disabled. The regression head and all adapter weights are trained end-to-end using an L1 loss on the normalized targets.

The BaLoRA inference network predicts layer-wise uncertainty coefficients conditioned on the input MOF structure. It reuses the frozen graph and volume embedding modules from the pretrained backbone as feature extractors. Given an input sample, graph token embeddings are obtained from the frozen CGCNN~\citep{xie2018crystal} encoder and aggregated via masked mean pooling over valid atom positions. The resulting graph-level representation is summed with the frozen volume embedding and passed through a lightweight MLP head:
\begin{equation}
\mathbf{x} \rightarrow \text{Frozen Graph Encoder} \rightarrow \bar{h}_{\text{graph}} \oplus h_{\text{vol}}
\rightarrow \text{MLP}(d \rightarrow d/2 \rightarrow L)
\rightarrow \text{Softplus} \rightarrow \boldsymbol{\alpha},
\end{equation}
where $L$ denotes the number of LoRA layers, $\oplus$ denotes element-wise addition, and $\boldsymbol{\alpha} \in \mathbb{R}^{L}$ are the predicted coefficients. The BaLoRA KL divergence term uses a prior dropout probability of $0.1$. The graph and volume embedding parameters are kept entirely frozen; all trainable parameters reside in the MLP head.

\section{Additional Results}\label{appendix:additional}
In this section we present additional ablation studies and convergence analyses that complement the main experimental results. We first examine the sensitivity of BaLoRA to the prior probability hyperparameter on commonsense reasoning, then ablate the choice of frozen encoder in the inference network for visual perception, and finally analyze the convergence of uncertainty estimates as a function of Monte Carlo steps on materials property prediction.

\subsection{Additional Results on Commonsense Reasoning}
Table~\ref{tab:ablation_prior_prob} reports the effect of the prior probability $p$ in the BaLoRA KL divergence loss (Equation~\eqref{eqn:kl_loss}) across all eight commonsense reasoning benchmarks. For both Llama-2-7B and Llama-3-8B, accuracy remains stable across a wide range of prior values ($p \in \{0.2, 0.4, 0.6, 0.8\}$), with no single setting consistently dominating across all tasks. On Llama-2-7B, the best average performance is obtained at $p=0.6$, while on Llama-3-8B, $p=0.8$ yields the highest scores on most benchmarks. Importantly, the spread between the best and worst prior settings is modest in all cases, indicating that BaLoRA is robust to this hyperparameter and does not require expensive tuning of $p$ in practice.

\begin{table}[tbp]
\centering
\caption{Ablation study on the effect of prior probability $p$ in the KL divergence loss formulation (equation~\eqref{eqn:kl_loss}) across benchmark tasks. Results demonstrate robustness across different values of $p$.}
\label{tab:ablation_prior_prob}
\resizebox{\textwidth}{!}{%
\begin{tabular}{lccccccccc}
\toprule
\textbf{Model} & \textbf{Prior Probability} ($p$) & \textbf{BoolQ} & \textbf{PIQA} & \textbf{SIQA} & \textbf{ARC-c} & \textbf{ARC-e} & \textbf{OBQA} & \textbf{HellaS} & \textbf{WinoG} \\
\midrule
\multirow{4}{*}{Llama-2-7B} & 0.8 & 72.08 & 84.17 & 79.99 & 70.90 & 84.89 & 81.00 & 91.44 & 84.29 \\
& 0.6 & 72.69 & 84.93 & 80.45 & 72.10 & 85.56 & 85.40 & 91.23 & 83.66 \\
& 0.4 & 71.13 & 83.68 & 76.31 & 71.50 & 84.76 & 83.80 & 90.55 & 83.74\\
& 0.2 & 73.00 & 83.30 & 80.09 & 68.86 & 83.16 & 82.40 & 90.69 & 82.08 \\
\midrule
\multirow{4}{*}{Llama-3-8B} & 0.8 & 76.42 & 89.99 & 81.78 & 80.89 & 91.20 & 88.40 & 96.11 & 88.40 \\
& 0.6 & 75.66 & 89.77 & 81.99 & 80.89 & 90.66 & 86.20 & 95.90 & 87.61\\
& 0.4 & 75.02 & 89.61 & 80.35 & 79.01 & 90.32 & 86.20 & 95.49 & 86.58 \\
& 0.2 & 74.98 & 88.74 & 80.09 & 79.95 & 90.28 & 85.60 & 95.62 & 86.50 \\
\bottomrule
\end{tabular}
}
\end{table}

\subsection{Additional Results on Vision Perception}
Figure~\ref{fig:vision_ablation} ablates the choice of frozen encoder used in the BaLoRA inference network for the visual perception experiments. We compare four Vision Transformer variants of increasing capacity: ViT-Tiny, ViT-Small, ViT-Medium (DeiT-Base), and ViT-Large (ViT-L/16). Across all four classification benchmarks, both accuracy (top) and ECE (bottom) are remarkably consistent regardless of encoder size. These results suggest that the BaLoRA inference network does not require a large or expensive encoder to produce well-calibrated uncertainty coefficients; even the smallest ViT-Tiny variant yields competitive performance, underscoring the lightweight nature of the approach.

\begin{figure}
    \centering
    \includegraphics[width=0.8\linewidth]{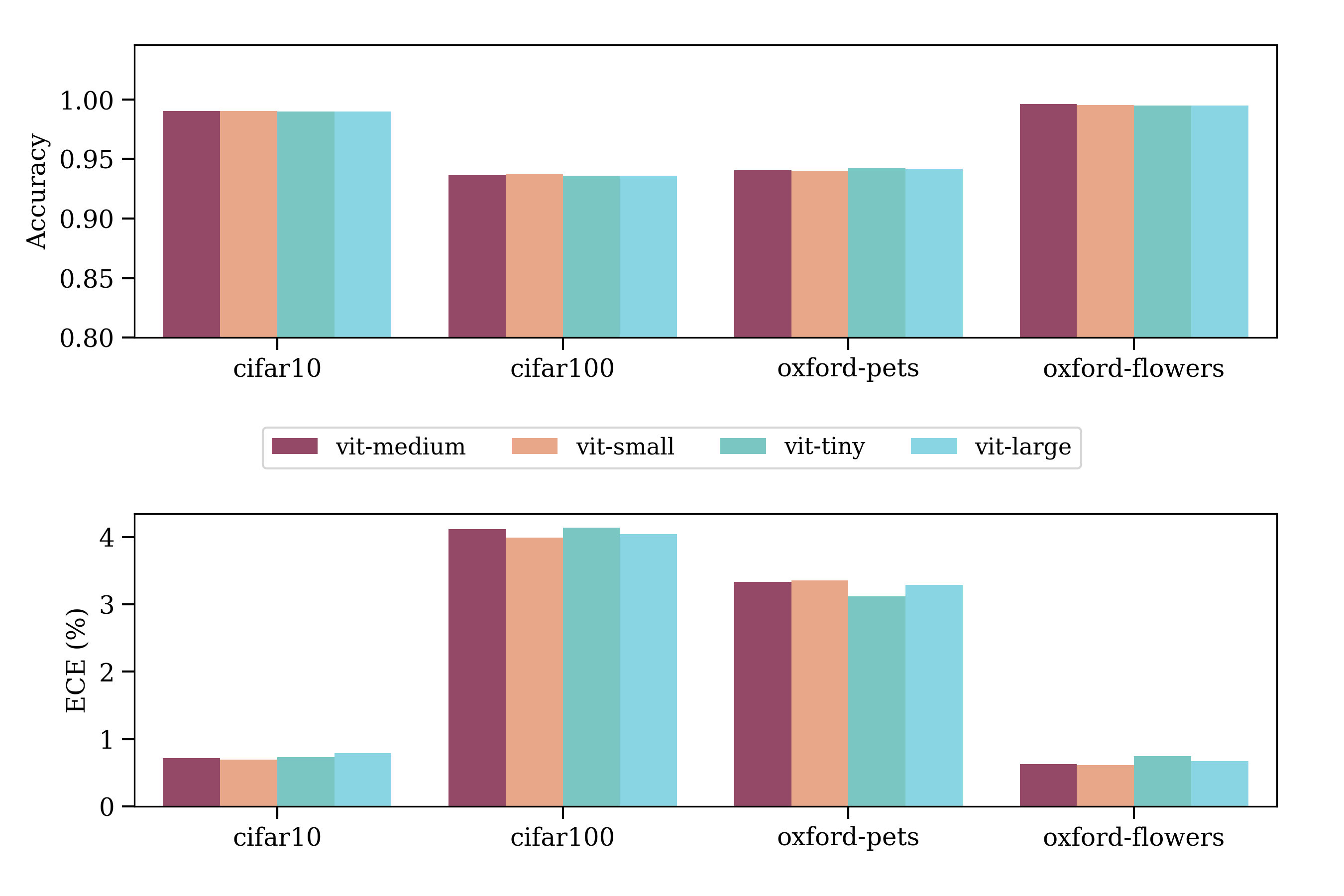}
    \caption{
    Ablation study of posterior network encoders on classification performance. We compare Vision Transformer variants: 
    \textbf{ViT-Large} (ViT-L/16 on ImageNet-21k), 
    \textbf{ViT-Medium} (DeiT-Base, distilled on ImageNet-1k), 
    \textbf{ViT-Small} (DeiT-Small, distilled on ImageNet-1k), and 
    \textbf{ViT-Tiny} (DeiT-Tiny, distilled on ImageNet-1k).
    }
    \label{fig:vision_ablation}
\end{figure}

\subsection{Additional Results on Materials Property Prediction}
Figure~\ref{fig:convergence_bandgap} shows the convergence behavior of BaLoRA on the QMOF bandgap prediction task as a function of the number of Monte Carlo (MC) forward passes at test time. The left panel reports MAE (in eV), which remains stable at approximately $0.267$~eV across all MC budgets from $10$ to $100$ steps, with narrow confidence bands indicating low variance across seeds. This confirms that the predictive mean is already well-estimated with a modest number of stochastic passes, and additional MC steps do not degrade accuracy. The right panel reports the Spearman rank correlation between predictive variance and squared error, a measure of uncertainty calibration quality. Here, a clear upward trend is observed: the correlation improves from approximately $0.315$ at $10$ MC steps to roughly $0.347$ at $100$ steps, with the most rapid gains occurring between $10$ and $50$ steps before plateauing. This indicates that while a small MC budget suffices for accurate point predictions, a moderately larger number of forward passes (${}\geq 50$) is beneficial for obtaining well-ranked uncertainty estimates. The shaded confidence intervals widen slightly at higher MC budgets, reflecting increased seed-level variability in the correlation statistic, but the overall trend is monotonically increasing. Together, these results demonstrate that BaLoRA provides stable predictions and progressively improving uncertainty quantification at test time without any retraining.
\begin{figure}[httb]
    \centering
    \includegraphics[width=0.8\linewidth]{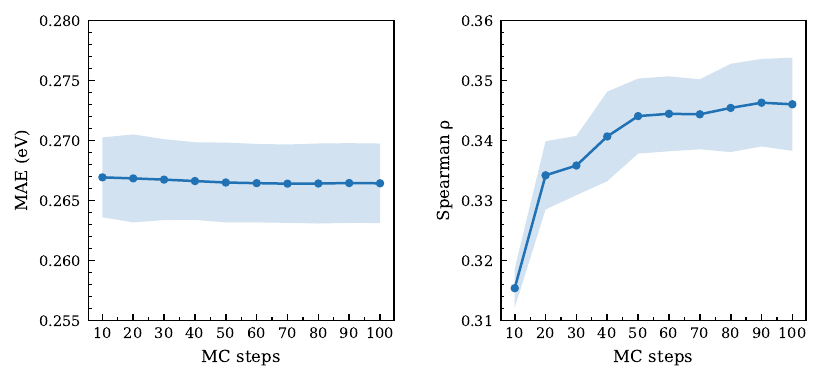}
    \caption{Performance convergence in eV for BaLoRA on bandgap prediction using \textbf{MOFTransformer} on QMOF dataset. The curve is obtained with test-time multiple forward passes (MC steps). No retraining is performed, uncertainty estimates are obtained purely through stochastic forward passes. Shaded regions denote ±1 standard deviation across three random seeds.}
    \label{fig:convergence_bandgap}
\end{figure}
%%%%%%%%%%%%%%%%%%%%%%%%%%%%%%%%%%%%%%%%%%%%%%%%%%%%%%%%%%%%

% \newpage
% \input{checklist.tex}

\end{document}